\newtheorem{theorem}{Theorem}
\newtheorem{corollary}{Corollary}
\begin{document}

\twocolumn[
\icmltitle{DizzyRNN: Reparameterizing Recurrent Neural Networks for Norm-Preserving Backpropagation}
\icmlauthor{Victor Dorobantu*}{vdd6@cornell.edu}
\icmlauthor{Per Andre Stromhaug*}{pas282@cornell.edu}
\icmlauthor{Jess Renteria*}{jvr35@cornell.edu}
\icmladdress{Cornell University, Ithaca, NY}

\vskip 0.3in
]

\begin{abstract}

The vanishing and exploding gradient problems are well-studied obstacles that make it difficult for recurrent neural networks to learn long-term time dependencies. We propose a reparameterization of standard recurrent neural networks to update linear transformations in a provably norm-preserving way through Givens rotations. Additionally, we use the absolute value function as an element-wise non-linearity to preserve the norm of backpropagated signals over the entire network. We show that this reparameterization reduces the number of parameters and maintains the same algorithmic complexity as a standard recurrent neural network, while outperforming standard recurrent neural networks with orthogonal initializations and Long Short-Term Memory networks on the copy problem.

\end{abstract}

\section{Defining the problem}

Recurrent neural networks (RNNs) are trained by updating model parameters through gradient descent with backpropagation to minimize a loss function. However, RNNs in general will not prevent the loss derivative signal from decreasing in magnitude as it propagates through the network. This results in the \textit{vanishing gradient problem}, where the loss derivative signal becomes too small to update model parameters \cite{VanishingGradient}. This hampers training of RNNs, especially for learning long-term dependencies in data.

\section{Signal scaling analysis}

The prediction of an RNN is the result of a composition of linear transformations, element-wise non-linearities, and bias additions. To observe the sources of vanishing and exploding gradient problems in such a network, one can observe the minimum and maximum scaling properties of each transformation independently, and compose the resulting scaling factors.

\subsection{Linear transformations}

Let $y = Ax$ be an arbitrary linear transformation, where $A \in \mathbb{R}^{m \times n}$ is a matrix of rank $r$. 

\begin{theorem}
The singular value decomposition (SVD) of $A$ is $A = U \Sigma V^T$, for orthogonal $U$ and $V$, and diagonal $\Sigma$ with diagonal elements $\sigma_1, \dots, \sigma_n$, the singular values of $A$. 
\end{theorem}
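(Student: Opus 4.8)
The plan is to derive the SVD from the spectral theorem for real symmetric matrices applied to $A^T A$. First I would form the matrix $A^T A \in \mathbb{R}^{n\times n}$ and note that it is symmetric and positive semidefinite, since $x^T A^T A x = \|Ax\|^2 \geq 0$ for all $x$. The spectral theorem then yields an orthonormal basis $v_1,\dots,v_n$ of $\mathbb{R}^n$ consisting of eigenvectors of $A^T A$, with real eigenvalues $\lambda_1 \geq \cdots \geq \lambda_n \geq 0$. I would set $\sigma_i = \sqrt{\lambda_i}$ and order the $v_i$ so that $\sigma_1 \geq \cdots \geq \sigma_r > 0$ and $\sigma_{r+1} = \cdots = \sigma_n = 0$; the count $r$ is forced because $\operatorname{rank}(A^T A) = \operatorname{rank}(A) = r$. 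Collecting the $v_i$ as columns produces the orthogonal matrix $V$.

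Next I would construct $U$. For $i = 1,\dots,r$ define $u_i = \sigma_i^{-1} A v_i$. A direct computation shows these are orthonormal: $u_i^T u_j = (\sigma_i\sigma_j)^{-1} v_i^T A^T A v_j = (\sigma_i\sigma_j)^{-1}\lambda_j\, v_i^T v_j = \delta_{ij}$. I would then extend $\{u_1,\dots,u_r\}$ to an orthonormal basis $u_1,\dots,u_m$ of $\mathbb{R}^m$ by Gram--Schmidt, and set $U = [\,u_1\ \cdots\ u_m\,]$, which is orthogonal by construction.

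Finally I would verify $A = U\Sigma V^T$ by checking that both maps agree on the basis $\{v_1,\dots,v_n\}$. For $i \leq r$ we have $A v_i = \sigma_i u_i$, while $U \Sigma V^T v_i = U\Sigma e_i = \sigma_i u_i$ as well. For $i > r$ we have $\|A v_i\|^2 = v_i^T A^T A v_i = \lambda_i = 0$, so $A v_i = 0$, and likewise $U\Sigma V^T v_i = 0$ since the $i$-th diagonal entry of $\Sigma$ vanishes. Two linear maps that agree on a basis are equal, hence $A = U\Sigma V^T$.

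I expect the main subtlety to be bookkeeping rather than depth: one must handle the dimensions of $\Sigma$ consistently with the rectangular shape of $A$ (here $\Sigma$ is $m\times n$ with diagonal entries $\sigma_1,\dots,\sigma_n$, padded with zero rows or columns as needed) and must justify that exactly $r$ singular values are nonzero via the identity $\operatorname{rank}(A^T A) = \operatorname{rank}(A)$. The only genuine external input is the spectral theorem for real symmetric matrices, which I would take as given.
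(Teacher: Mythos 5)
Your proof is correct, and it is the standard existence argument for the SVD: diagonalize the symmetric positive semidefinite matrix $A^TA$ via the spectral theorem, take $\sigma_i = \sqrt{\lambda_i}$, build the $u_i$ as normalized images $\sigma_i^{-1}Av_i$, extend to an orthonormal basis of $\mathbb{R}^m$, and check agreement on the eigenbasis. The orthonormality computation $u_i^Tu_j = (\sigma_i\sigma_j)^{-1}\lambda_j v_i^Tv_j = \delta_{ij}$ and the use of $\operatorname{rank}(A^TA)=\operatorname{rank}(A)$ to pin down the number of nonzero singular values are both right. For comparison: the paper does not prove this theorem at all --- it treats the SVD as a known classical fact, restating it in the appendix in the reduced form $A = \sum_{i=1}^r \sigma_i u_iv_i^T$ with $U\in\mathbb{R}^{m\times r}$, $V\in\mathbb{R}^{n\times r}$, and then proving only the two corollaries that follow from it. So your write-up supplies a proof where the paper supplies none; the only point of friction is the mismatch between the main text's ``full'' SVD (square $U$, $V$, diagonal entries $\sigma_1,\dots,\sigma_n$) and the appendix's rank-$r$ reduced form, a bookkeeping issue you already flag and handle correctly by padding $\Sigma$ to size $m\times n$.
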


From the SVD, Corollaries 1 and 2 follow.

\begin{corollary}

Let $\sigma_{min}$ and $\sigma_{max}$ be the minimum and maximum singular values of $A$, respectively. Then $\sigma_{min}\Vert x \Vert_2 \leq \Vert y \Vert_2 \leq \sigma_{max}\Vert x \Vert_2$.

\end{corollary}

\begin{corollary}

Let $\sigma_{min}$ and $\sigma_{max}$ be the minimum and maximum singular values of $A$, respectively. Then $\sigma_{min}$ and $\sigma_{max}$ are also the minimum and maximum singular values of $A^T$.

\end{corollary}

Proofs for these corollaries are deferred to the appendix.

Let $L$ be a scalar function of $y$. Then $$\frac{\partial L}{\partial x} = \frac{\partial L}{\partial y}\frac{\partial y}{\partial x} = A^T\frac{\partial L}{\partial y}$$ In an RNN, this relation describes the scaling effect of a linear transformation on the backpropagated signal. By Corollary 2, each linear transformation scales the loss derivative signal by at least the minimum singular value of the corresponding weight matrix and at most by the maximum singular value.

\begin{theorem}
All singular values of an orthogonal matrix are $1$.
\end{theorem}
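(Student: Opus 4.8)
The plan is to feed the SVD guaranteed by Theorem 1 into the defining identity of an orthogonal matrix. Write the orthogonal matrix as $Q = U\Sigma V^T$ with $U$ and $V$ orthogonal and $\Sigma$ diagonal with nonnegative entries $\sigma_1, \dots, \sigma_n$. First I would compute $Q^T Q = V\Sigma U^T U \Sigma V^T = V\Sigma^2 V^T$, using $U^T U = I$. Orthogonality of $Q$ means $Q^T Q = I$, so $V\Sigma^2 V^T = I$; left-multiplying by $V^T$ and right-multiplying by $V$ (again using orthogonality of $V$) gives $\Sigma^2 = I$. Reading this entrywise, $\sigma_i^2 = 1$ for each $i$, and since singular values are nonnegative we conclude $\sigma_i = 1$ for every $i$, which is the claim.

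I would also note a second route that ties the statement directly to Corollary 1: an orthogonal $Q$ preserves the Euclidean norm, since $\Vert Q x \Vert_2^2 = x^T Q^T Q x = x^T x = \Vert x \Vert_2^2$. Taking $x = v_i$, the $i$-th right singular vector (unit norm), we have $Q v_i = \sigma_i u_i$, so $\sigma_i = \Vert Q v_i \Vert_2 = \Vert v_i \Vert_2 = 1$. Either argument is short; I would present the first as the main proof since it is purely algebraic and self-contained given Theorem 1.

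The only place that needs a moment of care — and it is a bookkeeping point rather than a real obstacle — is the final step: from $\Sigma^2 = I$ one can only infer $\sigma_i = \pm 1$ in general, and it is precisely the nonnegativity of singular values (part of their definition) that pins down $\sigma_i = 1$. I would state this explicitly so the deduction is airtight.
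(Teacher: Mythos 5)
Your proof is correct, and in fact the paper never proves this statement at all: the appendix supplies proofs only for Corollaries 1 and 2 and for Theorems 3 and 4, and the theorem counter is simply advanced past Theorem 2, so your argument fills a gap rather than duplicating one. Your main algebraic route is sound: from $Q = U\Sigma V^T$ and $Q^TQ = I$ you get $V\Sigma^2V^T = I$, hence $\Sigma^2 = I$, and the nonnegativity of singular values (which you rightly flag as the load-bearing convention) forces $\sigma_i = 1$. Your second route is arguably the one more in the spirit of the paper, since it runs through the same norm-preservation mechanism that Corollary 1 formalizes ($\sigma_{min}\Vert x\Vert_2 \leq \Vert Qx\Vert_2 \leq \sigma_{max}\Vert x\Vert_2$ combined with $\Vert Qx\Vert_2 = \Vert x\Vert_2$ for all $x$ immediately gives $\sigma_{min} = \sigma_{max} = 1$); that version makes explicit why the theorem matters for the signal-scaling analysis that follows. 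One small bookkeeping note: the appendix version of Theorem 1 states the SVD with $U \in \mathbb{R}^{m\times r}$ and $V \in \mathbb{R}^{n\times r}$ for a rank-$r$ matrix, so to use $U^TU = I$ and $VV^T = I$ as you do, you should observe that an orthogonal $Q$ is square and full rank, making $U$ and $V$ genuinely square orthogonal matrices.
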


By Corollary 2, if the linear transformation $A$ is orthogonal, then the linear transformation will not scale the loss derivative signal.

\subsection{Non-linear functions}

Let $y = f(x)$ be an arbitrary element-wise non-linear transformation. Let $L$ be a scalar function of $y$. Then $$\frac{\partial L}{\partial x} = \frac{\partial L}{\partial y}\frac{\partial y}{\partial x} = f'(x)\odot\frac{\partial L}{\partial y}$$ where $f'$ denotes the first derivative of $f$ and $\odot$ denotes the element-wise product. The $i$-th element of $\frac{\partial L}{\partial y}$ is scaled at least by $\min{\left(f'(x_i)\right)}$ and at most by $\max{\left(f'(x_i)\right)}$.

\subsection{Bias}

Let $y = x + b$ be an arbitrary addition of bias to $x$. Let $L$ be a scalar function of $y$. Then $$\frac{\partial L}{\partial x} = \frac{\partial L}{\partial y}\frac{\partial y}{\partial x} = \frac{\partial L}{\partial y}$$Though additive bias does not preserve the norm during a forward pass over the network, it does preserve the norm of the backpropagated signal during a backward pass.

\section{Previous Work}

In general, the singular values of weight matrices in RNNs are allowed to vary unbounded, leaving the network susceptible to the vanishing and exploding gradient problems. A popular approach to mitigating this problem is through orthogonal weight initialization, first proposed by Saxe et al. \cite{OrthogonalInit}. Later, identity matrix initialization was introduced for RNNs with ReLU non-linearities, and was shown to help networks learn longer time dependencies \cite{IRNN}. 

Arjovsky et al. \cite{uRNN} introduced the idea of an orthogonal reparametrization of weight matrices. Their approach involves composing several simple complex-valued unitary matrices, where each simple unitary matrix is parametrized  such that updates during gradient descent happen on the manifold of unitary matrices. The authors prove that their network cannot have an exploding gradient, and believe that this is the first time a non-linear network has  been proven to have this property.

Wisdom et al. \cite{wisdom} note that Arjovsky's approach does not parametrize all orthogonal matrices, and propose a method of computing the gradients of a weight matrix such that the update maintains orthogonality, but also allows the matrix to express the full set of orthogonal matrices.

Jia et al. \cite{jia} propose a method of regularizing the singular values during training by periodically computing the full SVD of the the weight matrices, and clipping the singular values to have some maximum allowed distance from $1$. The authors show this has comparable performance to batch normalization in convolutional neural networks. As computing the SVD is an expensive operation, this approach may not translate well to RNNs with large weight matrices.

\section{DizzyRNN}

We propose a simple method of updating orthogonal linear transformations in an RNN in a way that maintains orthogonality. We combine this approach with the use of the absolute value function as the non-linearity, thus constructing an RNN that provably has no vanishing or exploding gradient. We term an RNN using this approach a \textit{Dizzy Recurrent Neural Network} (DizzyRNN). The reparameterization maintains the same algorithmic space and time complexity as a standard RNN.

\subsection{Givens rotation}

An orthogonal matrix $A \in \mathbb{R}^{n\times n}$ may be constructed as a product of $n(n-1)/2$ Givens rotations \cite{GivensRotation}. Each rotation is a sparse matrix multiplication, depending on only two elements and modifying only two elements, meaning each rotation can be performed in $O(1)$ time. Additionally, each rotation is represented by one parameter: a rotation angle. These rotation angles can be updated directly using gradient descent through backpropagation.

Let $a$ and $b$ denote the indices of the fixed dimensions in one rotation, with $a < b$. Let $y = R_{a,b}(\theta)x$ express this rotation by an angle $\theta$. The rotation matrix $R_{a,b}(\theta)$ is sparse and orthogonal with the following form: each diagonal element is $1$ except for the $a$-th and $b$-th diagonal elements, which are $\cos{\theta}$. Additionally, two off-diagonal elements are non-zero; the element $(a, b)$ is $\sin{\theta}$ and the element $(b, a)$ is $-\sin{\theta}$. All remaining off-diagonal elements are $0$. Let $L$ be a scalar function of $y$. Then $$\frac{\partial L}{\partial x} = \frac{\partial L}{\partial y}\frac{\partial y}{\partial x} = R_{a,b}^T(\theta)\frac{\partial L}{\partial y}$$ Recall that since the matrix $R_{a,b}(\theta)$ is orthogonal with minimum and maximum singular values of $1$, the transpose $R_{a,b}^T(\theta)$ also has minimum and maximum singular values of $1$ (by Corollary 2). 

To update the rotation angles, note that the only elements of $y$ that differ from the corresponding element of $x$ are $y_a$ and $y_b$. Each can be expressed as $y_a = \cos{\theta}x_a + \sin{\theta}x_b$ and $y_b = -\sin{\theta}x_a + \cos{\theta}x_b$. The derivative of $L$ with respect to the parameter $\theta$ is thus
\begin{align*}
\frac{\partial L}{\partial \theta} &= \frac{\partial L}{\partial y_a}\frac{\partial y_a}{\partial \theta} + \frac{\partial L}{\partial y_b}\frac{\partial y_b}{\partial \theta}\\&=\begin{bmatrix}\frac{\partial L}{\partial y_a}&\frac{\partial L}{\partial y_b}\end{bmatrix}\begin{bmatrix}-\sin{\theta}&\cos{\theta}\\-\cos{\theta}&-\sin{\theta}\end{bmatrix}\begin{bmatrix}x_a\\x_b\end{bmatrix}
\end{align*}

To simplify this expression, define the matrix $E_{a,b}$ as $$E_{a,b} = \begin{bmatrix}e_a^T\\e_b^T\end{bmatrix}$$ where $e_i \in \mathbb{R}^n$ is a column vector of zeros with a $1$ in the $i$-th index. The matrix $E_{a,b}$ selects only the $a$-th and $b$-th indices of a vector. Additionally, define the matrix $R_{\partial}(\theta)$ as $$R_{\partial}(\theta) = \begin{bmatrix}-\sin{\theta}&\cos{\theta}\\-\cos{\theta}&-\sin{\theta}\end{bmatrix}$$ Note that $R_{\partial}(\theta)$ always has this form; it does not depend on indices $a$ and $b$. Now the derivative of $L$ with respect to the parameter $\theta$ can be represented as $$\frac{\partial L}{\partial \theta} = \left(E_{a,b}\frac{\partial L}{\partial y}\right)^T R_{\partial}(\theta)E_{a,b}x$$ This multiplication can be implemented in $O(1)$ time.

\subsection{Parallelization Through Packed Rotations}

While the DizzyRNNs maintain the same algorithmic complexity as standard RNNs, it is important to perform as many Givens rotations in parallel as possible in order to get good performance on GPU hardware. Since each Givens rotation only  affects two values in the input vector,  we can perform $n/2$ Givens rotations in parallel. We therefore only need $n-1$ sequential operations, each of which has $O(n)$ computational and space complexity. We refer to each of these $n-1$ operations as a \textit{packed rotation}, representable by a sparse matrix multiplication.

\subsection{Norm preserving non-linearity}

Typically used non-linearities like tanh and sigmoid strictly reduce the norm of a loss derivative signal during backpropagation. ReLU only preserves the norm in the case that each input element is non-negative. We propose the use of an element-wise absolute value non-linearity (denoted as \textit{abs}). Let $y = abs(x)$ be the element-wise absolute value of $x$, and let $L$ be a scalar function of $y$. Then $$\frac{\partial L}{\partial x} = \frac{\partial L}{\partial y}\frac{\partial y}{\partial x} = sign(x)\odot\frac{\partial L}{\partial y}$$ The use of this non-linearity preserves the norm of the backpropagated signal.

\subsection{Eliminating Vanishing and Exploding Gradients}
Let $P_1, \dots, P_{n-1}$ represent $n-1$ packed rotations, $h_t$ be a hidden state at time step $t$, $x_t$ be an input vector, and $b$ be a bias vector. Define the hidden state update equation as
$$ h_t = abs(P_1 \cdots P_{n-1} h_{t-1} + W_xx_t + b)$$

If $W_x$ is square, it can also be represented as $n-1$ packed rotations $Q_1, \dots, Q_n$, resulting in the hidden state update equation
$$ h_t = abs(P_1 \cdots P_{n-1} h_{t-1} + Q_1\cdots Q_{n-1}x_t + b)$$

\subsection{Eliminating Vanishing and Exploding Gradients}
Arvosky et al. claim to provide the first proof of a network having no exploding gradient (through their uRNN) \cite{uRNN}. We show that DizzyRNN has no exploding gradient and, more importantly, no vanishing gradient.

Let a state update equation for an RNN be defined as
$$h_t = f(W_h h_{t-1} + W_x x_t + b)$$ Let $L$ be a loss function over the RNN.

\begin{theorem}
In a DizzyRNN cell, $\left \Vert \frac{\partial L}{\partial h_t} \right \Vert_2 = \left \Vert \frac{\partial L}{\partial h_{t-1}} \right \Vert_2 $
\end{theorem}

\begin{theorem}
If $W_x$ is square, then $\left \Vert \frac{\partial L}{\partial x_t} \right \Vert_2 = \left \Vert \frac{\partial L}{\partial h_{t-1}} \right \Vert_2 $
\end{theorem}

Therefore, the network can propagate 
loss derivative signals through an arbitrarily large number of state updates and stacked cells. The proofs for Theorems $3$ and $4$ are deferred to the Appendix.

\section{Incorporating Singular Value Regularization} 
\subsection{Exposing singular values}

Let $y = Ax$ be an arbitrary linear transformation, where $A \in \mathbb{R}^{n \times n}$ is a matrix of rank $n$. Such a matrix can be represented by the DizzyRNN reparameterization through a construction $U\Sigma V^T$, where $U$ and $V$ are orthogonal matrix and $\Sigma$ is a diagonal matrix. This construction represents a singular value decomposition of a linear transformation; however, the diagonal elements of $\Sigma$ (the singular values) can be updated directly along with the rotation angles of $U$ and $V$. Additionally, the distribution of singular values can be penalized easily, regularizing the network while allowing full expressivity of linear transformations.

\subsection{Diagonal matrix}

A matrix-vector product $y = \Sigma x$ where $\Sigma \in \mathbb{R}^{n\times n}$ is a diagonal matrix can be represented as the element-wise vector product $y = \sigma \odot x$, where $\sigma$ is the vector of the diagonal elements of $\Sigma$. Let $L$ be a scalar function of $y$. Then \begin{align*}\frac{\partial L}{\partial x} &= \frac{\partial L}{\partial y}\frac{\partial y}{\partial x} = \sigma \odot \frac{\partial L}{\partial y}\\\frac{\partial L}{\partial \sigma} &= \frac{\partial L}{\partial y}\frac{\partial y}{\partial \sigma} = x \odot \frac{\partial L}{\partial y}\end{align*} Each computation can be performed in $O(n)$ time.

\subsection{Singular value regularization}

For a DizzyRNN, an additional term can be added to the loss function $L$ to penalize the distance of the singular values of each linear transformation from $1$. For each cell in the stack, let $\sigma$ denote the vector of all singular values of all linear transformations in the cell; the regularization term is then $\frac{1}{2}\lambda\Vert\sigma - e\Vert_2^2$, where $\lambda$ is a penalty factor and $e$ is the vector of all ones. The loss function can now be rewritten as $$L' = L + \frac{1}{2}\lambda\sum_{i=1}^{M}\Vert\sigma^{(i)}-e\Vert_2^2$$ for a DizzyRNN with a stack height of $M$ where $\sigma^{(i)}$ represents the vector of all singular values associated with the $i$-th cell in the stack. Note that setting the $\lambda$ hyperparameter to $0$ allows the singular values to grow or decay unbounded, and setting $\lambda$ to $\infty$ constrains each linear transformation to be orthogonal. Additionally, note that initializing the singular values of each linear transformation to $1$ is equivalent to an orthogonal initialization of the DizzyRNN.

\section{Experimental results}

We implemented DizzyRNN in Tensorflow and compared the performance of DizzyRNN with standard RNNs, Identity RNNs \cite{IRNN}, and Long Short-Term Memory networks (LSTM) \cite{LSTM}. We evaluated each network on the copy problem described in \cite{uRNN}. We modify the loss function in this problem to only quantify error on the copied portion of the output, making our baseline accuracy $10\%$ (guessing at random).

The copy problem for our experiments consisted of
memorizing a sequence of 10 one-hot vectors of length 10
and outputting the same sequence (via softmax) upon seeing a delimiter
after a time lag of 90 steps.

We use a stack size of $1$ and use only a subset of the total $n-1$
possible packed rotations for every orthogonal matrix.

All experiments consist of epochs with 10 batches of size 100, sampled directly from the underlying distribution.

\begin{figure}[h]
\includegraphics[width=0.5\textwidth, trim={0, 5cm, 0, 5cm}]{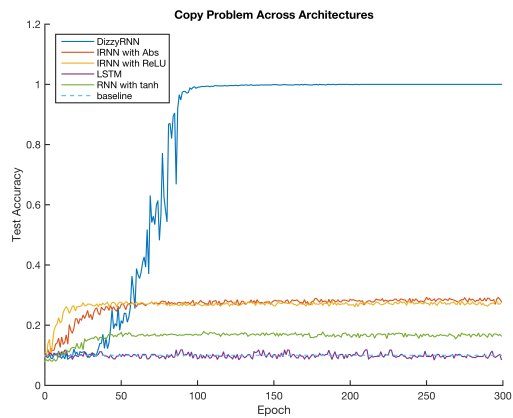}
\caption{Hidden state size of 128 across models. DizzyRNN has 10 packed rotations.}
\end{figure}

\begin{figure}[h]
\includegraphics[width=0.5\textwidth, trim={0, 5cm, 0, 5cm}]{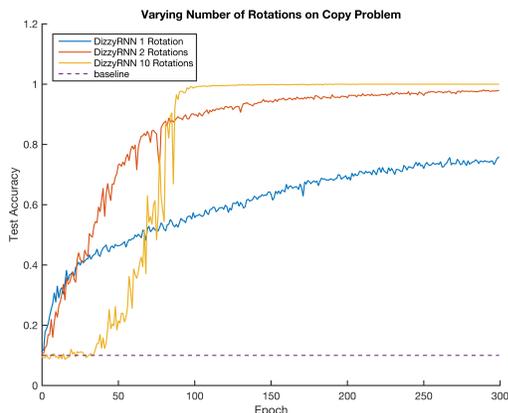}
\caption{Rotation here refers to a packed rotation. State size is fixed at 128.}
\end{figure}

DizzyRNN manages to reach near perfect accuracy in under 100 epochs
while other models either fail to break past the baseline or plateau
at a low test accuracy.
Note that 100 epochs corresponds to 100000 sampled training sequences.

\section{Conclusion}

DizzyRNNs prove to be a promising method of eliminating
the vanishing and exploding gradient problems.
The key is using pure rotations in combination with
norm-preserving non-linearities to force the norm
of the backpropagated gradient at each timestep to remain
fixed. Surprisingly, at least for the copy problem,
restricting weight matrices to pure rotations
actually improves model accuracy. This suggests that
gradient information is more valuable than model
expressiveness in this domain.

Further experimentation with sampling packed rotations will
be a topic of future work. Additionally, we would like to augment other state-of-the-art networks with Dizzy reparameterizations, such as Recurrent Highway Networks \cite{RHN}.

\bibliographystyle{icml2016}
\bibliography{submission}

\section*{Appendix}

\setcounter{corollary}{0}
\setcounter{theorem}{0}

Let $y = Ax$ be an arbitrary linear transformation, where $A \in \mathbb{R}^{m \times n}$ is a matrix of rank $r$. 

\begin{theorem}

The singular value decomposition of $A$ is expressed as $A = U\Sigma V^T = \sum_{i=1}^r \sigma_iu_iv_i^T$, for orthogonal $U \in \mathbb{R}^{m \times r}$ and $V \in \mathbb{R}^{n \times r}$, and diagonal $\Sigma \in \mathbb{R}^{r \times r}$. $u_i$ and $v_i$ are the $i$-th columns of $U$ and $V$, respectively, and $\sigma_i$ is the $i$-th diagonal element of $\Sigma$. The columns of $U$ are the left singular vectors, the columns of $V$ are the right singular vectors, and the diagonal elements of $\Sigma$ are the singular values.

\end{theorem}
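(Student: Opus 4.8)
The plan is to establish existence of the SVD by applying the spectral theorem to $A^T A$. First I would observe that $A^T A \in \mathbb{R}^{n \times n}$ is symmetric and positive semi-definite, so by the spectral theorem it admits an orthonormal basis of eigenvectors $v_1, \dots, v_n$ of $\mathbb{R}^n$ with real eigenvalues $\lambda_1 \geq \cdots \geq \lambda_n \geq 0$. Since $\Vert A v_i \Vert_2^2 = v_i^T A^T A v_i = \lambda_i$ and $\mathrm{rank}(A^T A) = \mathrm{rank}(A) = r$, exactly the first $r$ eigenvalues are strictly positive and the remaining $n - r$ vanish; I then set $\sigma_i = \sqrt{\lambda_i}$ for $i = 1, \dots, r$.

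Next I would define the candidate left singular vectors $u_i = \sigma_i^{-1} A v_i$ for $i = 1, \dots, r$ and verify orthonormality: $u_i^T u_j = (\sigma_i \sigma_j)^{-1} v_i^T A^T A v_j = (\sigma_i \sigma_j)^{-1} \lambda_j\, v_i^T v_j = \delta_{ij}$. I would collect these into $U = [\,u_1 \cdots u_r\,] \in \mathbb{R}^{m \times r}$ and $V = [\,v_1 \cdots v_r\,] \in \mathbb{R}^{n \times r}$, with $\Sigma \in \mathbb{R}^{r \times r}$ the diagonal matrix having entries $\sigma_1, \dots, \sigma_r$; both $U$ and $V$ have orthonormal columns by construction.

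Then I would show $A = \sum_{i=1}^r \sigma_i u_i v_i^T$ by checking that both sides act identically on the full orthonormal basis $v_1, \dots, v_n$. For $k \leq r$: the left side gives $A v_k = \sigma_k u_k$, while the right side gives $\sum_{i=1}^r \sigma_i u_i (v_i^T v_k) = \sigma_k u_k$. For $k > r$: $\Vert A v_k \Vert_2^2 = \lambda_k = 0$ forces $A v_k = 0$, and the right side is also $0$ since $v_i^T v_k = 0$ for every $i \leq r$. Two linear maps agreeing on a basis are equal, so $A = \sum_{i=1}^r \sigma_i u_i v_i^T = U \Sigma V^T$, which is the claimed decomposition with $u_i, v_i, \sigma_i$ as the stated singular vectors and values.

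The main obstacle is that the argument rests on two facts that I would either cite or establish separately: the spectral theorem for real symmetric matrices, and the identity $\mathrm{rank}(A^T A) = \mathrm{rank}(A)$, which follows from $\ker(A^T A) = \ker(A)$ since $A^T A x = 0$ implies $\Vert A x \Vert_2^2 = x^T A^T A x = 0$. Given those, everything above is routine verification. A fully self-contained treatment would instead obtain the eigenbasis by a variational argument — maximizing $x \mapsto \Vert A x \Vert_2$ over the unit sphere and inducting on dimension — but for this paper citing the spectral theorem suffices.
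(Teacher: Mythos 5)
Your proof is correct: the construction of the compact SVD by diagonalizing $A^TA$, setting $\sigma_i = \sqrt{\lambda_i}$, defining $u_i = \sigma_i^{-1}Av_i$, and verifying agreement on the eigenbasis is the standard argument, and each verification step (orthonormality of the $u_i$, the kernel argument for $k > r$, and $\ker(A^TA) = \ker(A)$) is carried out accurately. Note, however, that the paper itself offers no proof of this theorem at all --- it states the SVD as a classical fact and only proves the two corollaries that follow from it --- so there is no authorial argument to compare against; your write-up simply supplies the missing textbook derivation. One small point of alignment with the paper's phrasing: since $U \in \mathbb{R}^{m\times r}$ and $V \in \mathbb{R}^{n\times r}$ need not be square, ``orthogonal'' must be read as ``having orthonormal columns,'' which is exactly what your construction delivers.
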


\begin{corollary}

Let $\sigma_{min}$ and $\sigma_{max}$ be the minimum and maximum singular values of $A$, respectively. Then $\sigma_{min}\Vert x \Vert_2 \leq \Vert y \Vert_2 \leq \sigma_{max}\Vert x \Vert_2$.

\end{corollary}

\begin{proof}

Express $y$ as $y = \sum_{i=1}^r \sigma_iu_iv_i^Tx$, i.e. a linear combination of the orthonormal set of left singular vectors. The $i$-th coefficient of the combination is the scalar $\sigma_iv_i^Tx$. Since the set of left singular vectors is orthonormal, the $l_2$-norm of $y$ is the Pythagorean sum of the coefficients of the linear combination, i.e., $\Vert y\Vert_2 = \left(\sum_{i=1}^r\left(\sigma_iv_i^Tx\right)^2\right)^{1/2}$. Note that the term $v_i^Tx$ is the magnitude of the projection of $x$ onto $v_i$. Without loss of generality, fix the norm of $x$ to $1$. Let $v_{min}$ and $v_{max}$ be the right singular vectors corresponding to $\sigma_{min}$ and $\sigma_{max}$, respectively. Then, the norm of $y$ is minimized for $x$ parallel to $v_{min}$, and maximized for $x$ parallel to $v_{max}$. The corresponding norms of $y$ are $\sigma_{min}v_{min}^Tv_{min}$ and $\sigma_{max}v_{max}^Tv_{max}$. Since the set of right singular vectors is orthonormal, $v_{min}^Tv_{min} = v_{max}^Tv_{max} = 1$, and the corresponding norms of $y$ are $\sigma_{min}$ and $\sigma_{max}$.

\end{proof}

\begin{corollary}

Let $\sigma_{min}$ and $\sigma_{max}$ be the minimum and maximum singular values of $A$, respectively. Then $\sigma_{min}$ and $\sigma_{max}$ are also the minimum and maximum singular values of $A^T$.

\end{corollary}

\begin{proof}

Since $A = U\Sigma V^T$ where $U$ and $V$ are orthogonal, $A^T = V\Sigma U^T$. By the same construction as in the previous corollary, if $y = A^Tx$, then $\Vert y\Vert_2 = \left(\sum_{i=1}^r\left(\sigma_iu_i^Tx\right)^2\right)^{1/2}$. For all $x$ such that $\Vert x\Vert_2 = 1$, the quantity is minimized and maximized for $x = u_{min}$ and $x = u_{max}$, respectively, where $u_{min}$ and $u_{max}$ are the left singular vectors corresponding to $\sigma_{min}$ and $\sigma_{max}$. The corresponding minimum and maximum norms are $\sigma_{min}$ and $\sigma_{max}$.

\end{proof}

Let a state update equation for an RNN be defined as
$$h_t = f(W_h h_{t-1} + W_x x_t + b)$$ Let $L$ be a loss function over the RNN.

\setcounter{theorem}{2}

\begin{theorem}
In a DizzyRNN cell, $\left \Vert \frac{\partial L}{\partial h_t} \right \Vert_2 = \left \Vert \frac{\partial L}{\partial h_{t-1}} \right \Vert_2 $
\end{theorem}

\begin{proof}

Let $y = W_h h_{t-1} + W_x x_t + b$ and express $$\frac{\partial L}{\partial h_{t-1}} = \frac{\partial L}{\partial h_t}\frac{\partial h_t}{\partial y} \frac{\partial y}{\partial h_{t-1}} = W_h^T \left (f'(y) \odot  \frac{\partial L}{\partial {h_t}}\right )$$ The $l_2$-norms of each side of this equation are $$  \left \Vert \frac{\partial L}{\partial h_{t-1}} \right \Vert_2= \left \Vert W_h^T \left (f'(y) \odot  \frac{\partial L}{\partial {h_t}}\right) \right \Vert_2$$ In a DizzyRNN, $f$ is the absolute value function, thus the elements of $f'(y)$ are $1$ or $-1$. $W_h$ is orthogonal since it is defined by a composition of Givens rotations. Neither $f'(y)$ and $W_h^T$ scale the norm of the vector $\frac{\partial L}{\partial h_t}$,  thus
$$ \left \Vert\frac{\partial L}{\partial h_{t-1}} \right \Vert_2 = \left \Vert \frac{\partial L}{\partial {h_t}} \right \Vert_2$$ 

\end{proof}

If instead $f$ is the ReLU non-linearity, then $\left \Vert \frac{\partial L}{\partial h_{t-1}} \right \Vert_2$ is only equal to $\left \Vert  \frac{\partial L}{\partial {h_t}} \right \Vert_2$ in the case where all values in $y$ are non-negative, resulting in a diminishing gradient in all other cases.

\begin{theorem}
If $W_x$ is square, then $\left \Vert \frac{\partial L}{\partial x_t} \right \Vert_2 = \left \Vert \frac{\partial L}{\partial h_{t-1}} \right \Vert_2 $
\end{theorem}

\begin{proof}
By symmetry with the proof of Theorem 3, the norms are shown to be equal.
\end{proof}

\begin{figure}[H]
\includegraphics[width=0.5\textwidth, trim={0, 5cm, 0, 5cm}]{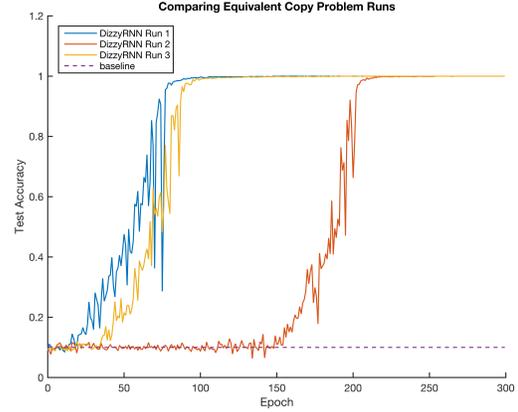}
\caption{Variance in convergence rates with constant hyperparameters. State size is 128 with 10 packed rotations.}
\end{figure}

\begin{figure}[H]
\includegraphics[width=0.5\textwidth, trim={0, 5cm, 0, 5cm}]{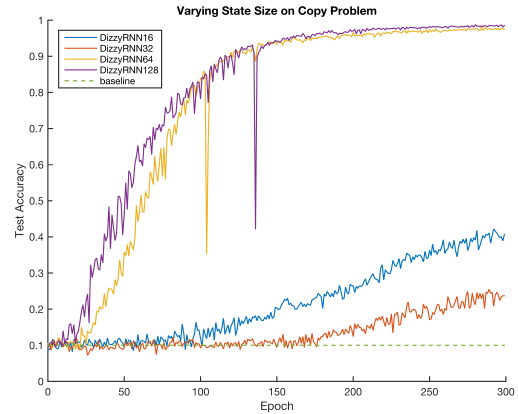}
\caption{The number of packed rotations is fixed at 5.
State sizes are 16, 32, 64, and 128}
\end{figure}

\begin{figure}[H]
\includegraphics[width=0.5\textwidth, trim={0, 5cm, 0, 5cm}]{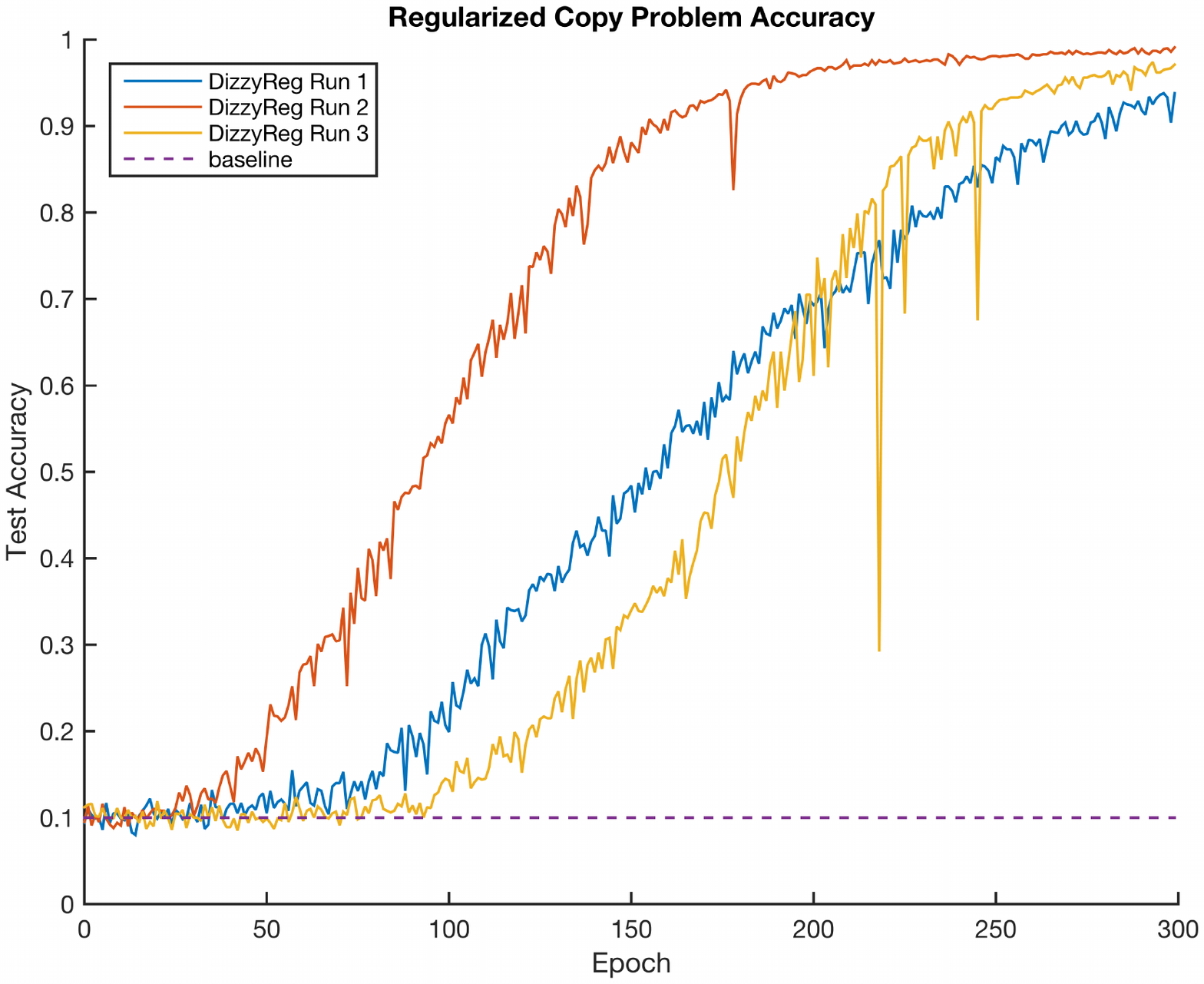}
\caption{Several regularized runs.}
\end{figure}

\begin{figure}[H]
\includegraphics[width=0.5\textwidth, trim={0, 5cm, 0, 5cm}]{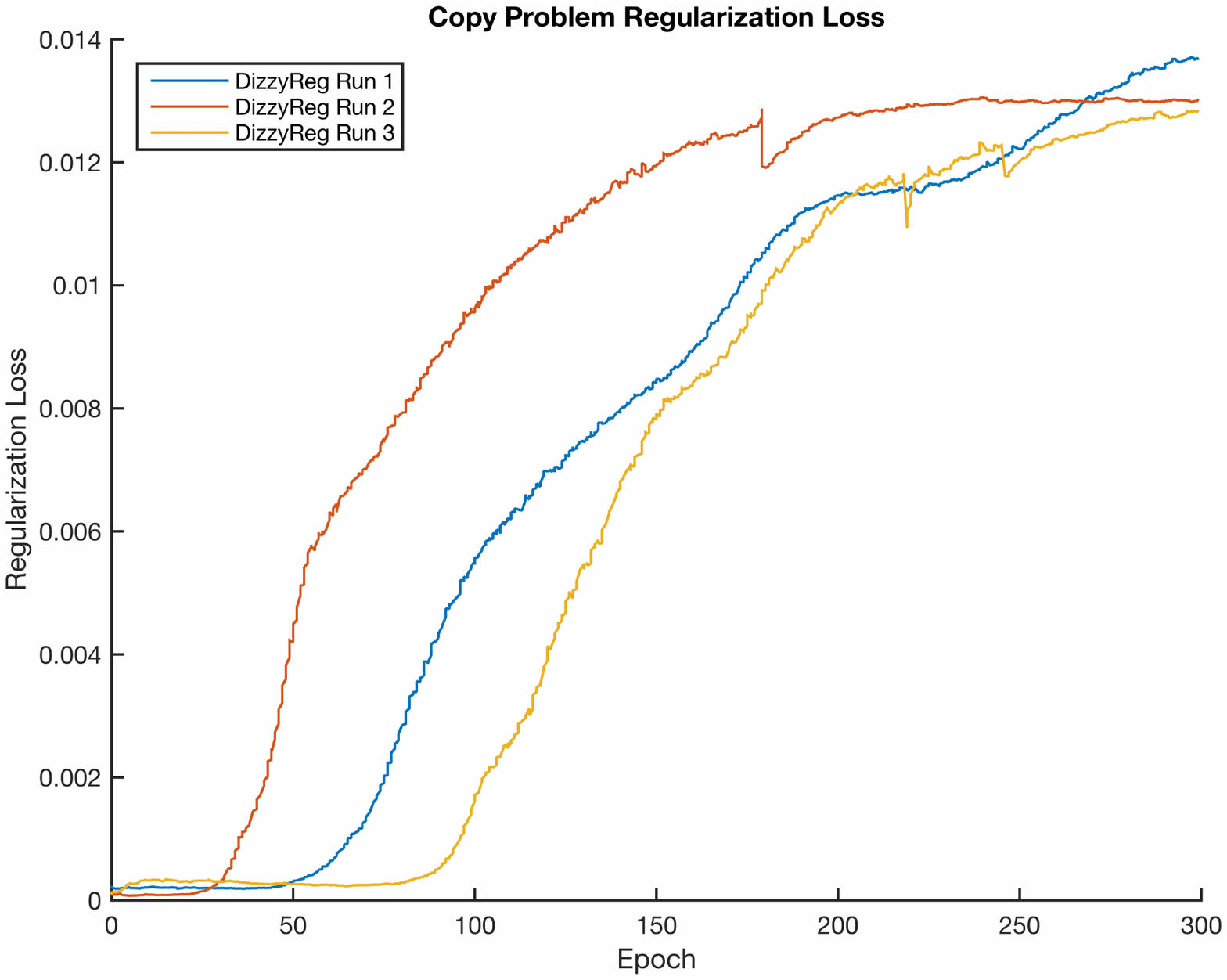}
\caption{Several regularized runs.}
\end{figure}

\end{document}